\documentclass[journal]{IEEEtran}
\IEEEoverridecommandlockouts

\usepackage{cite}
\usepackage{amsmath,amssymb,amsfonts}
\usepackage{graphicx}
\graphicspath{{figs/}}
\usepackage{textcomp}
\usepackage{booktabs}
\usepackage{pifont}
\usepackage{multirow}

\newcommand{\skewm}[1]{[#1]_{\times}}
\newcommand{\pt}{\mathbf{p}}
\newcommand{\nrm}{\mathbf{n}}
\newcommand{\dvec}{\mathbf{d}}
\newcommand{\evec}{\mathbf{e}}
\newcommand{\wvec}{\boldsymbol{\omega}}
\newcommand{\tvec}{\mathbf{t}}
\newcommand{\zerov}{\mathbf{0}}
\newcommand{\R}{\mathbb{R}}
\newcommand{\se}{\mathfrak{se}(3)}
\newcommand{\Ad}{\operatorname{Ad}}
\newcommand{\diag}{\operatorname{diag}}
\newcommand{\yes}{\ding{51}}
\newcommand{\no}{\ding{55}}
\newtheorem{proposition}{Proposition}
\newtheorem{theorem}{Theorem}
\newtheorem{corollary}{Corollary}
\newtheorem{definition}{Definition}

\begin{document}

\title{Degenerate in Whose Frame? An Equivariance Condition\\
for Degeneracy Detection in LiDAR Registration}

\author{Yujie Zhang, Chunlei Zhao, Yuzong Lin, Yuxuan Guo, Xiaohui Jia, and Jinyue Liu%
\thanks{This work has been submitted to the IEEE for possible publication. Copyright may be transferred without notice, after which this version may no longer be accessible.}%
\thanks{All authors are with the School of Mechanical Engineering, Hebei University of Technology, Tianjin 300401, China.}%
\thanks{Corresponding authors: Xiaohui Jia and Jinyue Liu (e-mail: \texttt{xiaohui.jia@placeholder}; \texttt{jinyue.liu@placeholder}).}}

\markboth{IEEE Robotics and Automation Letters. Preprint Version.}%
{Zhang \MakeLowercase{\textit{et al.}}: Degenerate in Whose Frame?}

\maketitle

\begin{abstract}
Degeneracy detectors for LiDAR registration commonly return six per-axis binary labels. We ask whether these labels are properties of the scene. Under a body-frame change, the point-to-plane information matrix transforms by congruence, $H'=\Ad(T)^{\top}H\Ad(T)$, not similarity. Congruence preserves nullity and, through the adjoint reparameterization, identifies the same physical twist subspace; the per-axis footprint and a thresholded spectrum need not be invariant. In a noise-free circular tunnel, shifting the origin by one metre changes which degrees of freedom are flagged. A generalized criterion $Hv=\lambda Mv$ is universally frame-independent over positive-semidefinite information forms if and only if its metric rule is equivariant. No fixed metric qualifies, while a rig-adapted one exists only at zero screw pitch, met in one of nineteen surveyed calibrations. The equivariant point-displacement metric $M=\sum_iJ_i^{\top}J_i$ yields dimensionless, scene-scale-invariant generalized eigenvalues. They are invariant to body frame, consistent changes of length unit and scene scales; the threshold also transfers empirically across sequences. Across 365 frame pairs from four public sequences, labels rarely change at practical extrinsic magnitudes, yet a remapping estimator's correction differs between body-frame choices on $44.5$--$69.5\%$ of pairs, with a median of $0.7$--$4.0$~mm and a maximum of $0.87$~m. The per-axis footprint changes even under the equivariant metric, placing the fundamental issue in the reported quantity.
\end{abstract}

\begin{IEEEkeywords}
LiDAR registration, degeneracy detection, observability, coordinate invariance, SLAM.
\end{IEEEkeywords}

\section{Introduction}

\IEEEPARstart{M}{obile} robots in tunnels, corridors, and underground galleries encounter geometry that repeats along the direction of travel. Point-to-plane ICP then weakly constrains motions tangent to the repeated surfaces, while the optimizer still returns six degrees of freedom and fills unobservable components from the initial guess or noise. Robust LiDAR localization therefore detects weak directions~\cite{Zhang2016,XICP2024,DCReg2026,Papais2026}, usually reporting one binary flag per degree of freedom.

\begin{figure}[t]
\centering
\includegraphics[width=0.95\columnwidth]{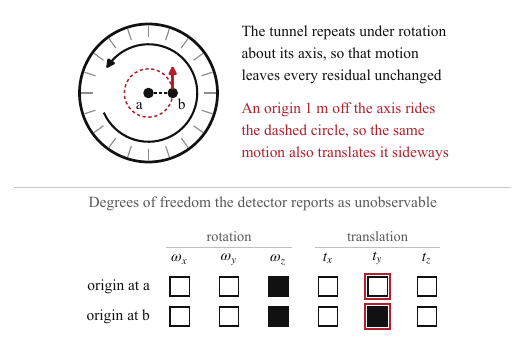}
\caption{Noise-free circular tunnel of radius $2.85$~m with radial normals, in cross-section and to scale. Rotation about its axis leaves every point-to-plane residual unchanged to first order. An on-axis origin represents the twist as pure rotation; shifting the origin one metre off-axis adds a sideways translational coordinate component (dashed circle). A per-axis detector thus reports different unobservable degrees of freedom in the two frames although neither noise nor a threshold enters and the degenerate subspace remains two-dimensional. Table~\ref{tab:instance} repeats the frame change on real data at a published calibration.}
\label{fig:teaser}
\end{figure}

A basic difficulty is the dimensional mismatch. The information matrix
\begin{equation}
H=\sum_i J_i^{\top}\nrm_i\nrm_i^{\top}J_i
\label{eq:H0}
\end{equation}
has a rotational block with units of $\mathrm{m}^2$ relative to its translational block. Its six eigenvalues are therefore not commensurable: changing the length unit can reorder them and alter any raw spectral threshold. Existing detectors normalize $H$ against a reference quadratic form, explicitly or implicitly; this letter asks what transformation that form must obey.

The adjoint maps body perturbations under rigid-body conjugation~\cite{Barfoot2024,Sola2018}, so the Gauss--Newton Hessian changes by congruence. Universal invariance of a generalized spectral criterion requires its normalizing form to transform by the same congruence as $H$; for point-to-plane registration this condition is sufficient. Congruence preserves nullity and adjoint-identifies the same physical twist subspace, but not the per-axis footprint or a soft-threshold decision. Because six independent bits cannot encode a screw, the footprint remains frame-dependent under \emph{any} metric. The reported object, not only its normalization, must therefore change.

\noindent\textbf{(1) Frame-dependent reports.} We separate preserved from frame-dependent quantities and give a noise- and threshold-free circular-tunnel example in which the six per-axis flags change. Across four public sequences the footprint also changes under the proposed criterion, while at practical extrinsic magnitudes a remapping estimator's correction differs on $44.5$--$69.5\%$ of frame pairs, roughly two orders of magnitude more often than the binary label.

\noindent\textbf{(2) Exact universal invariance condition.} Universal frame independence of $Hv=\lambda Mv$ over positive-semidefinite information forms holds exactly when the metric rule is equivariant, $\mathcal{M}(T\!\cdot\!\mathcal{D})=\Ad(T)^{\top}\mathcal{M}(\mathcal{D})\Ad(T)$, excluding the unnormalized, characteristic-length, block-scaled and Jacobi families. A metric adapted to one rig exists only at zero screw pitch, met by one of nineteen surveyed calibrations. The equivariant point-displacement metric $M=\sum_iJ_i^{\top}J_i$ yields dimensionless, scene-scale-invariant eigenvalues.

\noindent\textbf{(3) A dimensionless scale and a portable threshold.} For a constant metric, drift is governed by the effective lever arm $\delta_{\mathrm{eff}}=\|\dvec\|/\ell$, set by the designer through $\ell$. Outside an explicit band no frame change with that lever arm can cross the threshold; none does on $5.97\times10^{5}$ evaluations. Across our transfer tests, only the equivariant criterion preserves every decision under a new frame, a fixed-parameter numerical coordinate rescaling, a scene scale change and a sequence change.

\section{Related Work}

\subsection{Degeneracy Detection and Per-Axis Reports}
Early work related ICP stability to geometry-aware sampling and to the covariance of the linearized normal equations~\cite{Gelfand2003,Censi2007}, and later studies predicted alignment risk before registration and estimated localizability in tunnel-like environments~\cite{Nobili2018,Zhen2019}. Degeneracy-aware estimators eigendecompose the full $6\!\times\!6$ Hessian and remap the update~\cite{Zhang2016}, propagate directional selection into partial SLAM factors~\cite{Hinduja2019}, or separate rotational and translational localizability~\cite{XICP2024}. Recent variants add probabilistic tests~\cite{DRPM2024}, field-scale analysis~\cite{FieldAnalysis2025}, predictive risk~\cite{SuperLoc2025}, Schur-complement decoupling~\cite{DCReg2026}, and block-scaled remapping~\cite{Papais2026}. We compare the implied normalizations while holding the physical observation fixed.

Six binary localizability labels (three translational and three rotational) remain the common output~\cite{Nubert2022}. GEODE broadens the set of real degenerate scenarios but provides scenario-level rather than frame-equivalence annotations~\cite{GEODE2026}. Although convenient, such labels attach physical meaning to selected coordinates of a subspace of $\se$.

\subsection{Coordinate Invariance}
The adjoint relation itself is standard~\cite{Barfoot2024,Sola2018}. Wang \emph{et al.} recently required degeneracy judgments to be invariant to sensor position and coordinate origin, proposed a screw-theoretic formulation, and showed that separating rotation and translation can split one coupled mode into two~\cite{Wang2026}. Our analysis is complementary: it gives a universal necessary and sufficient condition on the normalization rule (Theorem~\ref{thm:iff}), proves that fixed metrics fail except for a measure-zero class of extrinsics, and alters only the coordinate representation of a fixed observation, not the observation itself.

\section{Problem Formulation}

The transformation central to this letter is elementary once the conventions are fixed, but inconsistent perturbation-side and sign conventions can silently introduce errors. We state these conventions before using the transformation.

\subsection{Registration and Conventions}

We consider point-to-plane registration with $N$ correspondences. For the $i$-th pair, let $\pt_i\in\R^3$ be the source point and $\nrm_i\in\R^3$, $\|\nrm_i\|=1$, the unit normal of the target surface, both expressed in the source body frame. We use a right (body) perturbation $\xi=[\wvec^{\top}\ \tvec^{\top}]^{\top}\in\R^6$ with the rotational components first, acting as $T\mapsto T\exp(\xi^{\wedge})$; $\pt_i$ denotes the source point itself rather than its image under the current iterate. The induced point displacement and residual variation are
\begin{align}
\delta\pt_i &= \wvec\times\pt_i+\tvec = J_i\,\xi,
& J_i &= \begin{bmatrix}-\skewm{\pt_i} & I_3\end{bmatrix},
\label{eq:jac}\\
\delta r_i &= \nrm_i^{\top}\delta\pt_i = \mathbf{a}_i^{\top}\xi,
& \mathbf{a}_i &= J_i^{\top}\nrm_i ,
\label{eq:res}
\end{align}
so that $\sum_i(\delta r_i)^2=\xi^{\top}H\xi$ with $H$ as in~\eqref{eq:H0}.

\subsection{Change of Body Frame}

The residuals~\eqref{eq:res} are distances between surfaces and are independent of the choice of origin, so nothing above singles out one body frame over another. Suppose the same registration is instead described about a different body frame (the LiDAR rather than the vehicle body, for example). Let $T=(R,\dvec)\in SE(3)$ denote the pose of the new body frame in the current one, so that $\pt^{\text{old}}=R\,\pt^{\text{new}}+\dvec$. The calibration constant $T$ is unrelated to the pose being estimated. Writing $C:=R^{\top}$, three quantities must be re-expressed: $\tilde{\pt}_i=C(\pt_i-\dvec)$, $\tilde{\nrm}_i=C\nrm_i$ and $\delta\tilde{\pt}_i=C\,\delta\pt_i$. Only the first involves $\dvec$; normals and displacements are free vectors with no point of application, so only point positions are affected by the origin shift. The same physical motion is parameterized by $\xi$ and $\xi'$ in the two frames:
\begin{equation}
\xi=A\,\xi',\qquad
A=\Ad(T)=\begin{bmatrix}R & 0\\ \skewm{\dvec}R & R\end{bmatrix},
\label{eq:Ad}
\end{equation}
which follows from $T\exp(\xi'^{\wedge})T^{-1}=\exp\!\big((\Ad(T)\xi')^{\wedge}\big)$. After reparameterizing its input and re-expressing its output, the Jacobian acquires a factor on each side:
\begin{equation}
\tilde{J}_i=C\,J_i\,A .
\label{eq:Jtilde}
\end{equation}
The extra factor $C$ is easily overlooked; direct expansion confirms~\eqref{eq:Jtilde}, whose blocks are $-C(\skewm{\pt_i}-\skewm{\dvec})C^{\top}=-\skewm{\tilde\pt_i}$ and $CR=I_3$. The residual gradient transforms with a single factor, $\tilde{\mathbf{a}}_i=\tilde{J}_i^{\top}\tilde{\nrm}_i=(CJ_iA)^{\top}(C\nrm_i)=A^{\top}\mathbf{a}_i$, because $C^{\top}C=I$ and a point-to-plane distance is a scalar that cannot depend on the frame in which it is measured. Summing outer products gives
\begin{equation}
\boxed{\;H'=A^{\top}HA\;}
\label{eq:congruence}
\end{equation}
This is a congruence transformation. Since $\det A=1$ and, as shown below, $A$ is orthogonal only when $\dvec=\zerov$, it preserves inertia but not the spectrum. Two structural facts follow. Writing
\begin{equation}
A^{\top}A=\begin{bmatrix}
I+R^{\top}(\|\dvec\|^2I-\dvec\dvec^{\top})R & -R^{\top}\skewm{\dvec}R\\
R^{\top}\skewm{\dvec}R & I
\end{bmatrix},
\label{eq:AtA}
\end{equation}
and noting that $\|\dvec\|^2I-\dvec\dvec^{\top}$ vanishes only for $\dvec=\zerov$ gives the following.

\begin{proposition}[Orthogonality]
\label{prop:ortho}
$\Ad(T)$ is orthogonal if and only if $\dvec=\zerov$. Consequently, the map $H\mapsto A^{\top}HA$ preserves the ordinary spectrum of every symmetric $H$ if and only if the frame change is a pure rotation.
\end{proposition}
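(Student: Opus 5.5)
The first claim follows from the block form~\eqref{eq:AtA}, and the second reduces to it by testing the congruence on suitable symmetric matrices.

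\emph{Orthogonality.} By~\eqref{eq:AtA}, $A^{\top}A=I$ requires the off-diagonal block $-R^{\top}\skewm{\dvec}R$ to vanish. Since $R$ is invertible, this forces $\skewm{\dvec}=0$, i.e.\ $\dvec=\zerov$. The top-left block gives the same conclusion independently: $\|\dvec\|^2I-\dvec\dvec^{\top}$ has trace $2\|\dvec\|^2$, so it vanishes only at $\dvec=\zerov$. Conversely, if $\dvec=\zerov$ then $A=\diag(R,R)$, and $A^{\top}A=I$ is immediate from $R^{\top}R=I$.

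\emph{Spectrum preservation.} If $A$ is orthogonal, $A^{\top}HA=A^{-1}HA$ is a similarity, so every symmetric $H$ keeps its spectrum. For the converse, assume $H\mapsto A^{\top}HA$ preserves the spectrum of every symmetric $H$.
\begin{itemize}
\item Take $H=I$. Then $A^{\top}A$ is symmetric positive definite and, by assumption, has all eigenvalues equal to $1$.
\item A symmetric matrix whose eigenvalues are all $1$ is the identity. Hence $A^{\top}A=I$, so $A$ is orthogonal.
\item The first part then gives $\dvec=\zerov$, i.e.\ the frame change is a pure rotation.
\end{itemize}
The test matrix $H=I$ is positive definite, so the converse still holds if one restricts to positive-semidefinite $H$, which is the class relevant to information matrices.

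There is no real obstacle here. The only point needing care is the converse direction: it rests on choosing $H=I$ and on the fact that a symmetric matrix with all eigenvalues equal to $1$ must equal $I$.
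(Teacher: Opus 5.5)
Your proof is correct and takes essentially the paper's route: the paper reads orthogonality off the block form \eqref{eq:AtA} through the top-left block $\|\dvec\|^2I-\dvec\dvec^{\top}$, which you check alongside the off-diagonal block. The paper leaves the spectral ``consequently'' implicit, and your test matrix $H=I$ supplies it in the same way the necessity step of Theorem~\ref{thm:iff} does with $H=M=I$.
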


The rotational part of a frame change therefore does not affect the spectrum; the entire effect arises from the translational part. Second, factoring $A=DQ$ with $Q=\diag(R,R)$ orthogonal and $D$ the corresponding purely translational adjoint, the spectrum of $A^{\top}A$ is independent of $R$ and admits the closed form
\begin{equation}
\sigma_{\max}(A)=\tfrac{\delta}{2}+\sqrt{1+\tfrac{\delta^2}{4}},\qquad
\sigma_{\min}(A)=\sigma_{\max}(A)^{-1},
\label{eq:sigma}
\end{equation}
with $\delta=\|\dvec\|$, the remaining two singular values equalling unity. These use the raw mixed rotation--translation coordinate norm and depend on the chosen length unit; they are coordinate diagnostics, not unit-invariant physical quantities. Section~\ref{sec:certificate} returns to~\eqref{eq:sigma} and replaces this scale by the dimensionless $\delta_{\mathrm{eff}}$.

\subsection{What Is Preserved, and What Is Not}
\label{sec:tiers}

A congruence such as~\eqref{eq:congruence} is not a similarity, and a detector reports more than a single invariant. Since $A$ is invertible, Sylvester's law gives $\operatorname{rank}H'=\operatorname{rank}H$ and $\mathcal{N}(H')=A^{-1}\mathcal{N}(H)$. Four quantities must be distinguished: (1) degenerate-subspace \emph{dimension} and (2) the physical subspace \emph{in $\se$}, adjoint-identified across frames despite different coordinates, are preserved; (3) its \emph{per-axis coordinate footprint}, which of the six basis twists it involves, and (4) a \emph{soft-threshold} outcome are not. The first two are preserved; the third and fourth are detector reports. The following exact instance separates them without noise or threshold placement.

Consider a circular tunnel of radius $2.85$~m with exactly radial normals and the body origin on its axis (Fig.~\ref{fig:teaser}). Translation along the axis and rotation about it leave every residual unchanged to first order, so the degenerate subspace is two-dimensional and contains a twist with zero translational part. Moving the origin one metre perpendicular to the axis, without rotation or noise, preserves rank and subspace dimension but changes the coordinate representation from $[0\ 0\ 1\ 0\ 0\ 0]^{\top}$ to $[0\ 0\ 1\ 0\ 1\ 0]^{\top}$, an angle of $45^{\circ}$ between the two tuples in the Euclidean norm of the displayed metre-based coordinates. This angle is only a coordinate diagnostic. The new-frame subspace contains \emph{no} twist with zero translational part, so a per-axis detector answers ``Is one of the degenerate directions a pure rotation?'' differently for the same physical situation in the two frames.

\section{Invariance: A Necessary and Sufficient Condition}
\label{sec:invariance}

That instance is not a pathology but a direct consequence of congruence. It raises the central question: which spectral criteria, if any, remain invariant under a change of frame? The answer constrains not a matrix but the rule that produces one.

\subsection{Equivariant Metric Rules}
\label{sec:equivariance}

Since the spectrum of $H$ alone is not commensurable, degeneracy is assessed through a generalized eigenvalue problem
\begin{equation}
H\,v=\lambda\,M\,v ,
\label{eq:gep}
\end{equation}
where $M$ is a metric on the twist space. Direct eigendecomposition of $H$ corresponds to $M=I_6$; normalization by a characteristic length $\ell$ to $M=\diag(\ell^2I_3,I_3)$; the block scaling of~\cite{Papais2026} to the same family with $\ell$ recomputed from $H$; a Jacobi preconditioner to $M=\diag(H)$. What follows concerns the \emph{rule} that assigns a metric to a data set, not a matrix.

\begin{definition}[Equivariance]
A metric rule $\mathcal{M}$ is \emph{equivariant} if for every $T\in SE(3)$ and every data set $\mathcal{D}$,
\begin{equation}
\mathcal{M}(T\!\cdot\!\mathcal{D})=\Ad(T)^{\top}\,\mathcal{M}(\mathcal{D})\,\Ad(T).
\label{eq:equivariance}
\end{equation}
\end{definition}

\begin{theorem}
\label{thm:iff}
Let $M,M'\succ0$. The generalized eigenvalues of $(H,M)$ and $(A^{\top}HA,M')$ agree for every symmetric positive-semidefinite $H$ if and only if $M'=A^{\top}MA$; if agreement up to a common positive factor $c$ is required, then $M'=c^{-1}A^{\top}MA$. Consequently, universal frame independence of the criterion~\eqref{eq:gep} over positive-semidefinite information forms is equivalent to equivariance of $\mathcal{M}$; for point-to-plane Hessians, equivariance is therefore sufficient.
\end{theorem}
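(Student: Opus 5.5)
Sufficiency is immediate, so I will dispose of it first. If $M'=A^{\top}MA$, then $H'v'=\lambda M'v'$ reads $A^{\top}(H-\lambda M)Av'=0$. Because $A=\Ad(T)$ is invertible, this is equivalent to $(H-\lambda M)v=0$ with $v=Av'$. So the pencils $(H,M)$ and $(A^{\top}HA,A^{\top}MA)$ have identical generalized eigenvalues, and the eigenvectors are related by the adjoint. The scaled case $M'=c^{-1}A^{\top}MA$ simply multiplies every eigenvalue by $c$.

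For necessity I reduce to a statement about a single pair of metrics. Set $N:=A^{-\top}M'A^{-1}\succ0$. Then $(A^{\top}HA,M')$ is congruent to $(H,N)$ and has the same eigenvalues. The hypothesis therefore becomes: $(H,M)$ and $(H,N)$ have the same generalized eigenvalues for every PSD $H$, and I must show $N=M$ (or $N=c^{-1}M$). I will probe with rank-one forms $H=uu^{\top}$. The pencil $(uu^{\top},M)$ has a single nonzero eigenvalue $u^{\top}M^{-1}u$, with all others zero. Agreement of the spectra thus forces $u^{\top}M^{-1}u=u^{\top}N^{-1}u$ for all $u$, or $=c\,u^{\top}N^{-1}u$ in the scaled version. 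Both sides are symmetric quadratic forms, so polarization gives $M^{-1}=N^{-1}$, hence $N=M$, i.e.\ $M'=A^{\top}MA$; in the scaled case $N=c^{-1}M$. In the scaled version $c$ is assumed common to all $H$. If it could depend on $H$, I would take $H=M$ (spectrum all ones) to pin it down, and then take $H=M+uu^{\top}$ to recover the same conclusion.

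For the equivalence with equivariance, I instantiate the argument with $M=\mathcal{M}(\mathcal{D})$ and $M'=\mathcal{M}(T\cdot\mathcal{D})$, together with $H'=A^{\top}HA$ from \eqref{eq:congruence}. Universal frame independence, meaning for all $T$, all $\mathcal{D}$ and all PSD $H$, then says exactly that \eqref{eq:equivariance} holds, by the necessity part, and conversely by the sufficiency part. For point-to-plane Hessians the information forms are of the specific type \eqref{eq:H0}, and they form a subset of the PSD matrices. Sufficiency applies verbatim because \eqref{eq:congruence} holds for them, so equivariance is sufficient there. Necessity is not claimed for this restricted class.

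The main obstacle is stating the necessity direction carefully. "Universal" must quantify over arbitrary PSD $H$, independently of the metric rule's data set; otherwise rank-one probes may be unavailable, for example if $H$ were tied to $\mathcal{D}$. I would state explicitly that $H$ ranges over all PSD forms, which is what the rank-one test requires. Everything else is routine linear algebra.
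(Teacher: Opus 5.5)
Your proof is correct. Three parts coincide with the paper: sufficiency, the reduction to $N=A^{-\top}M'A^{-1}$ (the paper's $\hat M$), and your handling of the equivariance consequence. The necessity step is where you take a different route.

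The paper uses a single probe, $H=M$. Since $\lambda(M,M)=(1,\dots,1)$, agreement forces $\lambda(M,\hat M)=(1,\dots,1)$, and simultaneous congruence diagonalization of the definite pencil then gives $\hat M=M$. You instead sweep the rank-one probes $uu^{\top}$, read off the lone nonzero eigenvalue $u^{\top}M^{-1}u$ as a trace, and polarize to obtain $M^{-1}=N^{-1}$.

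Your version replaces the diagonalization by a trace computation and identifies $M^{-1}$ directly. It uses a six-parameter family of test forms, however. In the scaled case it also genuinely needs $c$ to be common to all $H$, because for a single rank-one probe the two spectra always agree up to \emph{some} positive factor. The paper's version shows that one test form suffices, and it works even if $c$ depends on $H$. With $H=M$, every eigenvalue of $(M,N)$ equals $c$, which already forces $N=c^{-1}M$. So the extra probe $H=M+uu^{\top}$ in your fallback is superfluous.

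Two small remarks. First, take the convention of your sufficiency paragraph, where new eigenvalues are $c$ times the old. Then the rank-one identity should read $u^{\top}N^{-1}u=c\,u^{\top}M^{-1}u$. As you wrote it, $u^{\top}M^{-1}u=c\,u^{\top}N^{-1}u$ gives $N=cM$, not $N=c^{-1}M$.

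Second, your insistence that $H$ range over all positive-semidefinite forms is well founded for your probes in particular. A single correspondence contributes $uu^{\top}$ only for $u=[(\pt\times\nrm)^{\top}\ \nrm^{\top}]^{\top}$, whose two blocks are orthogonal. Such probes cannot detect $N^{-1}=M^{-1}+\mu\bigl[\begin{smallmatrix}0&I_3\\ I_3&0\end{smallmatrix}\bigr]$ for small $\mu\neq0$.
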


\begin{IEEEproof}
Sufficiency: substituting $H'=A^{\top}HA$ and $M'=A^{\top}MA$ into $H'v'=\lambda'M'v'$ and left-multiplying by $A^{-\top}$ gives $H(Av')=\lambda'M(Av')$, so $\lambda'\in\lambda(H,M)$ with eigenvector $v=Av'$. Necessity: with $u=Av'$ the problem reads $Hu=\lambda'\hat{M}u$, where $\hat{M}:=A^{-\top}M'A^{-1}\succ0$. Take $H=M$; since $\lambda(M,M)=(1,\dots,1)$, agreement forces $\lambda(M,\hat{M})=(1,\dots,1)$. The pencil is symmetric-definite and admits a simultaneous congruence diagonalization $X^{\top}\hat{M}X=I$, $X^{\top}MX=\Lambda$ with $X$ invertible; all eigenvalues equal to unity give $\Lambda=I$, hence $M=\hat{M}$, i.e.\ $M'=A^{\top}MA$. The scaled version follows identically. Within the point-to-plane model, repeating each point with three mutually orthogonal normals yields $\sum_kJ^{\top}\nrm_k\nrm_k^{\top}J=J^{\top}J$; this realizes the point-displacement form but is not needed for the universal necessity argument above.
\end{IEEEproof}

For the point-displacement metric introduced below, the condition $M\succ0$ holds unless all points are collinear; coplanar configurations are admissible.

For metrics of the weighted form $M=\sum_iJ_i^{\top}W_iJ_i$ with $W_i$ symmetric, equivariance is equivalent to $W_i'=CW_iC^{\top}$. Comparing $\mathcal{D}$ with $\mathcal{D}\cup\{j\}$ isolates $J_j^{\top}\Delta_jJ_j=0$ with $\Delta_j=C^{\top}W_j'C-W_j$. Since $J_j$ has full row rank, $\Delta_j=0$. The weight must therefore be a covariant tensor on the displacement space rather than a fixed array of numbers.

\subsection{No Fixed Metric Is Equivariant}

Most detectors use a fixed matrix rather than a rule. Such a matrix satisfies~\eqref{eq:equivariance} only if it is invariant under the adjoint. Two propositions settle when this occurs; the first covers the isotropic case, including both the unnormalized spectrum and characteristic-length scaling.

\begin{proposition}[Isotropic constant metrics]
\label{prop:isotropic}
For $M_0=\diag(\alpha I_3,\beta I_3)$ with $\alpha,\beta>0$, the matrix $A^{\top}M_0A$ has the block structure of~\eqref{eq:AtA} with diagonal blocks $\alpha I+\beta R^{\top}(\|\dvec\|^2I-\dvec\dvec^{\top})R$ and $\beta I$, and off-diagonal block $-\beta R^{\top}\skewm{\dvec}R$, which vanishes if and only if $\dvec=\zerov$, independently of $R$, $\alpha$ and $\beta$.
\end{proposition}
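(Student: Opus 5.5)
Since $M_0$ is block diagonal, I will compute $A^{\top}M_0A$ directly from the block form of $A=\Ad(T)$ in~\eqref{eq:Ad}. Write
\[
A=\begin{bmatrix}R&0\\ \skewm{\dvec}R&R\end{bmatrix},\qquad A^{\top}=\begin{bmatrix}R^{\top}&-R^{\top}\skewm{\dvec}\\ 0&R^{\top}\end{bmatrix},
\]
using $\skewm{\dvec}^{\top}=-\skewm{\dvec}$. Multiplying the blocks gives three results. The $(1,1)$ block is $\alpha R^{\top}R+\beta R^{\top}\skewm{\dvec}^{\top}\skewm{\dvec}R$. The $(1,2)$ block is $\beta R^{\top}\skewm{\dvec}^{\top}R=-\beta R^{\top}\skewm{\dvec}R$. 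The $(2,2)$ block is $\beta R^{\top}R=\beta I$. The $(2,1)$ block is the transpose of the $(1,2)$ block.

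Next I will simplify the $(1,1)$ block with the identity $\skewm{\dvec}^{\top}\skewm{\dvec}=-\skewm{\dvec}^2=\|\dvec\|^2I-\dvec\dvec^{\top}$. This follows from $\dvec\times(\dvec\times x)=\dvec(\dvec^{\top}x)-\|\dvec\|^2x$. Together with $R^{\top}R=I$, the $(1,1)$ block becomes $\alpha I+\beta R^{\top}(\|\dvec\|^2I-\dvec\dvec^{\top})R$. Setting $\alpha=\beta=1$ recovers~\eqref{eq:AtA}, which confirms that the block structure matches as claimed.

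For the final claim I will argue in both directions. If $\dvec=\zerov$, then $\skewm{\dvec}=0$ and the off-diagonal block vanishes. Conversely, suppose $-\beta R^{\top}\skewm{\dvec}R=0$. Because $\beta>0$ and $R$ is invertible, this forces $\skewm{\dvec}=0$. The map $\dvec\mapsto\skewm{\dvec}$ is injective, so $\dvec=\zerov$. Neither direction uses anything about $\alpha$ or the particular $R$, which gives the independence statement. One could also note $R^{\top}\skewm{\dvec}R=\skewm{R^{\top}\dvec}$, though this is not needed.

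There is no real obstacle here. The only points needing care are the sign convention: the transpose of a skew matrix introduces the minus sign, and that sign is what matches~\eqref{eq:AtA}. I will also keep $\beta$ rather than $\alpha$ multiplying the $\dvec$-dependent terms, because those terms come from the lower row of $A$, which is weighted by the translational block $\beta I$.
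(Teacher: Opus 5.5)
Your proposal is correct and follows the paper's own (implicit) argument: the direct block multiplication that produces~\eqref{eq:AtA}, carried out with the weights $\alpha$ and $\beta$. Your signs and blocks all check out, including the identity $\skewm{\dvec}^{\top}\skewm{\dvec}=\|\dvec\|^2I-\dvec\dvec^{\top}$ and the step that $-\beta R^{\top}\skewm{\dvec}R=0$ forces $\skewm{\dvec}=0$ and hence $\dvec=\zerov$.
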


The second proposition addresses a weaker requirement: not a metric valid for all frame changes, but one adapted to the single extrinsic of a particular rig.

\begin{proposition}[Adapted constant metrics]
\label{prop:adapted}
Given $T=(R,\dvec)$ with rotation angle $\varphi\neq0$ about the unit axis $\evec$, there exists a constant $M_0\succ0$ with $\Ad(T)^{\top}M_0\Ad(T)=M_0$ if and only if $\evec^{\top}\dvec=0$. Allowing a common positive factor does not weaken the condition, since taking determinants and using $\det\Ad(T)=1$ forces that factor to unity. A pure translation admits no such metric.
\end{proposition}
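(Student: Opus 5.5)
The approach rests on the fact that an invariant positive-definite form exists exactly when the group generated by $A=\Ad(T)$ is relatively compact. Equivalently, $A$ must be diagonalizable over $\mathbb{C}$ with all eigenvalues on the unit circle. If $A^{\top}M_0A=M_0$ with $M_0\succ0$, then $M_0^{1/2}AM_0^{-1/2}$ is orthogonal, so $A$ is similar to an orthogonal matrix. Conversely, if $A$ is similar to an orthogonal $O=SAS^{-1}$, then $M_0=S^{\top}S$ is invariant. So the task reduces to deciding when $A$ is semisimple with unimodular spectrum. The spectrum is unimodular automatically, because $A$ is block lower triangular with both diagonal blocks equal to $R$. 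The eigenvalues are therefore $1,e^{\pm i\varphi}$, each with multiplicity two. The whole question is semisimplicity.

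I would decompose $\dvec=\dvec_\perp+(\evec^{\top}\dvec)\evec$ relative to the rotation axis.

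\textbf{Case $\evec^{\top}\dvec=0$.} Since $\varphi\neq0$, $I-R$ is invertible on $\evec^\perp$. So there is $\mathbf{c}\perp\evec$ with $(I-R)\mathbf{c}=\dvec$. The transformation $T$ is then a pure rotation about the axis through $\mathbf{c}$, i.e.\ $T=S R_0 S^{-1}$ with $S$ a pure translation and $R_0=(R,\zerov)$. Taking adjoints gives $A=\Ad(S)\diag(R,R)\Ad(S)^{-1}$. The matrix $\diag(R,R)$ is orthogonal by Proposition~\ref{prop:ortho}, so $M_0=\Ad(S)^{-\top}\Ad(S)^{-1}$ is invariant and positive definite.

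\textbf{Case $\evec^{\top}\dvec=h\neq0$.} This is the main step: I must show $A$ has a nontrivial Jordan block at eigenvalue $1$. By the same conjugation, I would reduce to the case where $T$ is a screw along $\evec$ through the origin, i.e.\ $\dvec=h\evec$ with $R\evec=\evec$. Take $v=(\zerov,\evec)$; then $Av=(\zerov,R\evec)=v$, so $v$ is a fixed vector. Take $w=(\evec,\zerov)$; then $Aw=(R\evec,\skewm{\dvec}R\evec)=(\evec,h\,\evec\times\evec)=w$. That gives two fixed vectors, so no Jordan block appears yet. The coupling must be checked differently, because $\skewm{\dvec}R$ acting on the $e^{\pm i\varphi}$ eigenspace does not vanish. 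Let $P$ be the complex eigenspace of $R$ for $e^{i\varphi}$, spanned by a vector $u\perp\evec$. Then $A(u,\zerov)=e^{i\varphi}(u,\,h\,\evec\times u)$, and $\evec\times u=i\,u$ or $-i\,u$ up to orientation. Hence $A(u,\zerov)=e^{i\varphi}(u,\pm ih\,u)$, while $A(\zerov,u)=e^{i\varphi}(\zerov,u)$. On the span of $(u,\zerov),(\zerov,u)$, the matrix of $A$ is therefore $e^{i\varphi}\begin{bmatrix}1&0\\ \pm ih&1\end{bmatrix}$, which is a Jordan block precisely when $h\neq0$. So $A$ is not semisimple, and no invariant $M_0\succ0$ exists.

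For the pure translation ($R=I$, $\dvec\neq\zerov$), $A-I$ is nonzero and nilpotent, so $A$ is a nontrivial unipotent and again not semisimple. Alternatively, $A^k$ is unbounded in $k$, which contradicts $M_0$-orthogonality. The remark about a common positive factor follows as stated: $\det(cM_0)=\det M_0$ forces $c^6=1$, so $c=1$.
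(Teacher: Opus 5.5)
Your proof is correct and follows the same route as the paper's sketch. When $\evec^{\top}\dvec=0$ you conjugate by a pure translation (Chasles) to a pure rotation and set $M_0=\Ad(S)^{-\top}\Ad(S)^{-1}$. Otherwise you use that an invariant $M_0\succ0$ forces diagonalizability, which your orthogonality of $M_0^{1/2}AM_0^{-1/2}$ expresses just as the paper's boundedness argument does. You then exhibit the Jordan coupling at $e^{\pm i\varphi}$ in screw-canonical coordinates; your opening claim that the block sits at eigenvalue $1$ is a false start, which you rightly abandon.

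One small repair is needed. At $\varphi=\pi$ the relevant eigenspace of $R$ is two-dimensional, so your multiplicity count becomes $1$ twice and $-1$ four times, and an arbitrary $u$ in that eigenspace need not satisfy $\evec\times u=\pm iu$. Choose $u$ as an eigenvector of $\skewm{\evec}$ with $\evec\times u=iu$; then $Ru=e^{i\varphi}u$ follows from $R=\exp(\varphi\skewm{\evec})$. This makes your $2\times2$ Jordan block valid for every $\varphi$, including the coalescence case the paper treats separately.
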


\begin{IEEEproof}[Sketch]
By Chasles' theorem the screw translation along the axis is $h=\evec^{\top}\dvec$; $h=0$ means $T$ is a rotation about an axis in space, so $T=P R_0 P^{-1}$ for a pure translation $P$ carrying that axis to the origin, and $M_0=\Ad(P)^{-\top}\Ad(P)^{-1}$ is invariant because $\Ad(R_0)$ is orthogonal. Conversely, an invariant positive definite form makes $\{\Ad(T)^n\}$ bounded; the eigenvalues of $\Ad(T)$ all have unit modulus, so boundedness requires diagonalizability. In canonical screw coordinates, $h\neq0$ creates a nontrivial Jordan coupling between the rotational and translational eigenspaces. Away from the eigenvalue-coalescence case, this appears as geometric multiplicity one versus algebraic multiplicity two; at $\varphi=\pi$, the same coupling still leaves $\Ad(T)$ defective. A pure translation gives $\Ad(T)=I+\text{nilpotent}$.
\end{IEEEproof}

The condition $\evec^{\top}\dvec=0$ defines a measure-zero set. Section~\ref{sec:extrinsics} reports how often it is met in practice.

Data-derived rules also fail when they do not transform by congruence. A Jacobi preconditioner satisfies $\diag(A^{\top}HA)\neq A^{\top}\diag(H)A$ in general: it responds to a frame change, but not as~\eqref{eq:equivariance} requires. Table~\ref{tab:metrics} classifies the families accordingly. The two failure modes are distinct and, as Section~\ref{sec:transfer} shows, experimentally distinguishable.

\begin{table}[t]
\caption{Metric Families Under the Equivariance Condition}
\label{tab:metrics}
\centering
\footnotesize
\begin{tabular}{@{}llc@{}}
\toprule
Family & Metric rule & Equivariant \\
\midrule
Unnormalized spectrum~\cite{Zhang2016} & $M=I_6$, fixed & \no \\
Characteristic length & $M=\diag(\ell^2I,I)$, fixed & \no \\
Block scaling~\cite{Papais2026} & same family, $\ell$ from $H$ & \no \\
Jacobi preconditioning & $M=\diag(H)$ & \no \\
\textbf{Point displacement} & $M=\sum_iJ_i^{\top}J_i$ & \yes \\
\bottomrule
\end{tabular}
\end{table}

\subsection{The Point-Displacement Metric}

The weighted form at the end of Section~\ref{sec:equivariance} shows that an admissible rule requires a tensorial weight on the displacement space. The simplest is the identity: $W_i=I_3$ gives
\begin{equation}
M=\sum_{i=1}^{N}J_i^{\top}J_i,\qquad \xi^{\top}M\xi=\sum_i\|\delta\pt_i\|^2 ,
\label{eq:M}
\end{equation}
the total squared surface displacement induced by $\xi$. Since $CI_3C^{\top}=I_3$, the rule is equivariant. The physical interpretation is direct: displacement energy is a scalar and cannot depend on the frame in which it is computed.

Two further properties follow from the same congruence structure. A uniform scaling of the point cloud, $\pt\mapsto s\pt$, yields $J(s\pt)=J(\pt)\,S$ with $S=\diag(sI_3,I_3)$, hence $H\mapsto S^{\top}HS$ and $M\mapsto S^{\top}MS$. A change of length unit is the same map up to a common positive factor; both are congruences applied to $H$ and $M$ alike.

\begin{corollary}
\label{cor:invariance}
The generalized eigenvalues $\lambda(H,M)$ are simultaneously invariant under the choice of body frame, under a change of length unit, and under a uniform scaling of the scene. Moreover $\lambda\in[0,1]$, since $\|\nrm_i\|=1$ implies $(\nrm_i^{\top}\mathbf{x})^2\le\|\mathbf{x}\|^2$ termwise.
\end{corollary}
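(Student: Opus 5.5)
The plan is to reduce all three invariances to one observation: each operation acts on the pair $(H,M)$ by a common congruence $X^{\top}(\cdot)X$ with $X$ invertible, possibly together with a common positive factor. The sufficiency argument of Theorem~\ref{thm:iff} then applies verbatim. If $H'=X^{\top}HX$ and $M'=X^{\top}MX$, left-multiplying $H'v'=\lambda M'v'$ by $X^{-\top}$ gives $H(Xv')=\lambda M(Xv')$. A common factor $c>0$ on both $H$ and $M$ cancels in the ratio. The bound $\lambda\in[0,1]$ will be handled separately through the Rayleigh quotient.

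\textbf{Body frame.} Equation~\eqref{eq:congruence} gives $H'=A^{\top}HA$ with $A=\Ad(T)$. By~\eqref{eq:Jtilde}, $\tilde J_i^{\top}\tilde J_i=A^{\top}J_i^{\top}C^{\top}CJ_iA=A^{\top}J_i^{\top}J_iA$, so $M'=A^{\top}MA$. This is exactly the equivariance noted after~\eqref{eq:M}, and the sufficiency half of Theorem~\ref{thm:iff} finishes this case.

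\textbf{Scene scaling.} I will verify $J(s\pt)=J(\pt)S$ directly: $-\skewm{s\pt}=-\skewm{\pt}\,s$, and the translational block is unchanged. Hence $\mathbf a_i\mapsto S^{\top}\mathbf a_i$ because the normals are unchanged, so $H\mapsto S^{\top}HS$, and likewise $M\mapsto S^{\top}MS$. Since $S$ is invertible for $s>0$, this is again a common congruence.

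\textbf{Change of length unit.} Besides scaling the points by $s$, a unit change rescales the translational coordinates of $\xi$, while rotational coordinates are dimensionless. This is the step needing the most care: one must fix whether residuals are also rescaled. If they are, $\delta r$ scales by $s$, so both $H$ and $M$ acquire the same extra scalar factor (an $s^2$) on top of a common congruence. The factor cancels, since generalized eigenvalues are invariant under $(H,M)\mapsto(cH,cM)$. I will state the bookkeeping explicitly, writing the unit change as the coordinate map $\xi\mapsto\diag(I,sI)\xi$ composed with the scene scaling, so that the net effect is $(H,M)\mapsto(c\,X^{\top}HX,\;c\,X^{\top}MX)$.

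\textbf{Range of $\lambda$.} Since $M\succ0$ (non-collinear points), the generalized eigenvalues are the stationary values of
\[
\frac{\xi^{\top}H\xi}{\xi^{\top}M\xi}=\frac{\sum_i(\nrm_i^{\top}J_i\xi)^2}{\sum_i\|J_i\xi\|^2}.
\]
Cauchy--Schwarz with $\|\nrm_i\|=1$ gives $(\nrm_i^{\top}\mathbf x)^2\le\|\mathbf x\|^2$ for each $i$. Summing yields $0\le\xi^{\top}H\xi\le\xi^{\top}M\xi$, and hence every $\lambda$ lies in $[0,1]$.
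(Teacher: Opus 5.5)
Your proposal is correct and follows the paper's own route. The body-frame change, the scene scaling and the unit change all act on $H$ and $M$ by the same congruence; for the unit change this holds up to the common factor $s^2$, matching the paper's remark that it is ``the same map up to a common positive factor.'' The sufficiency half of Theorem~\ref{thm:iff} then applies, and $\lambda\in[0,1]$ follows from the same termwise Cauchy--Schwarz bound on the Rayleigh quotient.

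For the unit-change bookkeeping, you could simplify by noting that the new-unit data are numerically identical to the scaled scene. The net map is then $S^{\top}(\cdot)S=s^2\,X(\cdot)X$ with $X=\diag(I_3,s^{-1}I_3)$, rather than a further composition. The conclusion holds either way, since it only needs $H$ and $M$ to receive the same transformation.
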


The bound permits $\lambda$ to be read as the fraction of induced surface displacement that the point-to-plane residual observes, so that an absolute threshold $\tau$ carries meaning across scenes and sensors, provided $H$ and $M$ are assembled from the same correspondences and weights.

\subsection{A Dimensionless Scale and a Safety Certificate}
\label{sec:certificate}

For fixed metrics, the magnitude of frame dependence is partly designer-set. Characteristic-length scaling is invariant to a pure unit change when the dimensioned $\ell$ is converted with that unit, but it remains frame-dependent. Rescaling coordinates while holding the numerical $\ell$ fixed instead changes the physical normalization. For $M_0=\diag(\ell^2I,I)$ this dependence is governed by $\tilde{B}=M_0^{1/2}AM_0^{-1/2}$, yielding
\begin{equation}
\tilde{B}=\Ad\big((R,\ \dvec/\ell)\big),\qquad
\delta_{\mathrm{eff}}=\|\dvec\|/\ell ,
\label{eq:deff}
\end{equation}
The effective lever arm $\delta_{\mathrm{eff}}$ is dimensionless. The unnormalized spectrum is $\ell=1$ in the current coordinate convention, so a unit change alters its implicit physical characteristic length. Increasing $\ell$ attenuates, but does not remove, frame dependence.

Combining~\eqref{eq:deff} with Ostrowski's theorem~\cite{Ostrowski1959} bounds the drift of each eigenvalue by $\theta_k\in[\sigma_{\min}^2,\sigma_{\max}^2]$ evaluated at $\delta_{\mathrm{eff}}$. The resulting band is conservative and is not used to predict drift; instead, it certifies when a threshold crossing is impossible.

\begin{corollary}[Certificate]
\label{cor:cert}
If $\lambda_k/\kappa\notin[\sigma_{\min}^2,\sigma_{\max}^2]$ evaluated at $\delta_{\mathrm{eff}}$, then no change of body frame with that lever arm can move $\lambda_k$ across that detector's own threshold $\kappa$.
\end{corollary}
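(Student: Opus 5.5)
The plan is to reduce the statement to a two-sided Ostrowski bound on the ratio $\lambda_k'/\lambda_k$, and then read off the threshold-crossing condition. Fix the constant metric $M_0=\diag(\ell^2I,I)$ and set $\hat H:=M_0^{-1/2}HM_0^{-1/2}$. The generalized eigenvalues of $(H,M_0)$ are then the ordinary eigenvalues of $\hat H$. After the frame change the detector sees $(A^{\top}HA,M_0)$. Its eigenvalues are those of
\begin{equation*}
M_0^{-1/2}A^{\top}M_0^{1/2}\,\hat H\,M_0^{1/2}AM_0^{-1/2}=\tilde B^{\top}\hat H\tilde B ,
\end{equation*}
with $\tilde B$ as in~\eqref{eq:deff}. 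So the new spectrum is a congruence of the old normalized one by $\tilde B=\Ad((R,\dvec/\ell))$.

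Next I apply Ostrowski's theorem~\cite{Ostrowski1959} to this congruence. It gives $\lambda_k'=\theta_k\lambda_k$ for the eigenvalues in sorted order, with $\theta_k\in[\sigma_{\min}^2(\tilde B),\sigma_{\max}^2(\tilde B)]$. Because $\tilde B$ is itself an adjoint with translation $\dvec/\ell$, the closed form~\eqref{eq:sigma} applies with $\delta$ replaced by $\delta_{\mathrm{eff}}$. By the factorization $A=DQ$ this is independent of $R$. The band therefore depends only on the lever arm, and it contains $1$ because $\sigma_{\min}=\sigma_{\max}^{-1}$.

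Finally, a threshold crossing means the decision $\lambda\lessgtr\kappa$ flips, so $\kappa$ must lie between $\lambda_k$ and $\theta_k\lambda_k$. That forces $\kappa/\lambda_k$ to lie between $1$ and $\theta_k$, hence inside the band. By the inversion symmetry $\sigma_{\min}^2=\sigma_{\max}^{-2}$ this is equivalent to $\lambda_k/\kappa\in[\sigma_{\min}^2,\sigma_{\max}^2]$. The contrapositive is the stated certificate, and it holds uniformly over all $T$ with the given $\|\dvec\|$, since the band does not depend on $R$ or on the direction of $\dvec$.

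The main obstacle is bookkeeping rather than depth, and it comes in two parts. First, I must check that the normalized matrix really is $\tilde B=\Ad((R,\dvec/\ell))$. This is the block computation $M_0^{1/2}AM_0^{-1/2}$: the lower-left block $\skewm{\dvec}R$ is divided by $\ell$ and the diagonal blocks are unchanged. Second, I must make sure the index $k$ is interpreted as the sorted position, which is what Ostrowski's theorem requires, and handle the degenerate case $\lambda_k=0$. That case is trivially fine, because zero eigenvalues are preserved by Sylvester's law and never cross a positive $\kappa$.
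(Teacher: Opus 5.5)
Your proposal is correct and follows the paper's own route: normalize by $M_0^{1/2}$ to obtain a congruence by $\tilde B=\Ad((R,\dvec/\ell))$, apply Ostrowski's theorem with the closed-form singular values evaluated at $\delta_{\mathrm{eff}}$, and use the inversion symmetry $\sigma_{\min}=\sigma_{\max}^{-1}$ of the band to obtain the stated contrapositive. You also make explicit several points the paper leaves implicit: the block computation of $\tilde B$, the independence of the band from $R$ and the direction of $\dvec$, the sorted indexing required by Ostrowski's theorem, and the case $\lambda_k=0$.
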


\section{Experiments}
\label{sec:experiments}

The analysis leaves three empirical questions: the practical magnitude of frame effects; whether the four quantities of Section~\ref{sec:tiers} behave as predicted; and their downstream effect. Because the comparison re-expresses one information matrix in two frames, implementation checks come first.

\subsection{Design}

We evaluate on four public sequences recorded in structurally repetitive environments: GEODE~\cite{GEODE2026} (shield tunnel, Livox Avia), the NTNU LiDAR-degeneracy data~\cite{Nissov2024} (bicycle tunnel, Ouster OS0-128), RTS-GT~\cite{Vaidis2024} (campus underground tunnel, 16-beam Velodyne), and M3DGR~\cite{ZhangM3DGR2025} (indoor corridor, Livox Mid-360), giving 365 registered frame pairs in total. Point clouds are downsampled to a $0.10$~m voxel grid, normals are estimated with a fixed physical radius of $0.30$~m and a minimum neighbourhood of eight points, correspondences are gated at $1.0$~m, and the residual weighting is the symmetrized dual-normal point-to-plane form.

For each pair the point cloud, correspondences and physical motion are fixed; only the body frame changes. We use five arms: the identity; pure rotation $(R,\zerov)$; pure translation $(I,\dvec)$; general $(R,\dvec)$; and an orthogonal control, where $A$ is a random orthogonal $6\times6$ non-adjoint. The last three are sampled at fourteen magnitudes $\|\dvec\|$ from $0$ to $20$~m with twenty random draws apiece; the pure rotation is drawn twenty times and the identity once. This gives $861$ frame changes per pair, $3.14\times10^{5}$ total, and $2.51\times10^{6}$ evaluations over eight metric configurations: the seven of Table~\ref{tab:flips} plus characteristic-length scaling at $\ell=1$, equal to $I_6$ in metres. Unless stated otherwise, lever-arm summaries use the pure-translation arm; the general arm contributes only to the full evaluation count.

Two checks precede the analysis. Rebuilding $H$ and $M$ from the exported correspondences reproduces the pipeline's generalized eigenvalues to a largest relative deviation of $3.7\times10^{-12}$ over all 365 pairs, and applying $A$ to $H$ and $M$ agrees with re-expressing the point cloud and reassembling from scratch to $1.6\times10^{-15}$ and $2.4\times10^{-15}$ respectively. Both checks are sensitive: omitting the factor $C$ in~\eqref{eq:Jtilde}, exchanging $A$ and $A^{-1}$, leaving normals in the target frame, and assembling $H$ and $M$ from different point sets are each detected by one of them.

\subsection{Thresholds Anchored to Analytic Answers}
\label{sec:kappa}

Each detector needs a threshold, and calibrating a baseline against the proposed criterion would presuppose the conclusion. We therefore use synthetic corridors whose number of degenerate directions follows from geometry: a rectangular straight corridor has one (translation along the axis), a circular one has two (translation along the axis and rotation about it), and a right-angle corner has none. The synthetic pipeline follows the real one in voxel downsampling, fixed-radius normal estimation \emph{from noisy points}, and weighting; only the geometry differs. Three range-noise levels and three voxel sizes define the nine preprocessing configurations; crossed with eight seeds over the three corridors, they give 216 instances. To prevent an alignment artefact, the voxel grid origin is randomized because the corridor walls lie at coordinates commensurate with the three voxel sizes: a fixed grid would align the source cloud but not the rotated target cloud. At the coarsest voxel, the ratio of source to target points carrying a valid normal would then be $1.63$, against $0.97$ once the origin is randomized. For each detector, we identify the threshold interval that reproduces the counts $1$, $2$ and $0$ simultaneously; choosing its best threshold deliberately favours the baselines.

The proposed criterion admits such an interval in all nine configurations, and their intersection spans $[5.3\times10^{-3},\,8.3\times10^{-2}]$, a width of $1.20$ decades containing the value $\tau=10^{-2}$ used throughout. For every other metric the intersection is empty (Fig.~\ref{fig:kappa}), and the way each fails differs. Characteristic-length scaling at $\ell=0.25$ and the Jacobi preconditioner admit no threshold reproducing the three counts within \emph{any} single configuration, reproducing only two and one of them respectively. The unnormalized spectrum, which in metres coincides with $\ell=1$, and $\ell=10$ each fail in one configuration and in the same one: the coarsest voxel at the highest noise level, where fixed-radius normal estimation is already at its limit. Block scaling and $\ell=2.85$ admit an interval in every configuration taken alone yet share no value across the nine, the gaps being $0.79$ and $0.13$ decades; the second is narrow enough that an aligned voxel grid closes it, which is why the randomization above matters. The key point is therefore not whether a baseline can be calibrated, but that calibrating one also fixes the preprocessing. Where the intersection is empty, the threshold is taken from the configuration closest to the real data; where no interval reproducing all three exists anywhere, it is the geometric midpoint of the widest interval reproducing the largest number of them, in that same configuration. Both rules were fixed before the runs, and in either case the failure is reported rather than the metric recalibrated against the proposed criterion. All thresholds below are those of this section, except that the proposed criterion keeps the round $\tau=10^{-2}$ rather than the geometric midpoint $2.1\times10^{-2}$ of its interval.

\begin{figure}[t]
\centering
\includegraphics[width=\columnwidth]{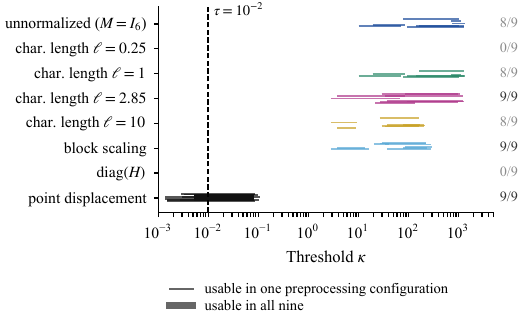}
\caption{Threshold intervals that reproduce the analytically known degenerate-direction counts of the three synthetic corridors across the nine preprocessing configurations. A metric is usable across configurations only where its intervals overlap; this occurs only for the point-displacement metric.}
\label{fig:kappa}
\end{figure}

\subsection{Decision Stability}

With every threshold calibrated to known-in-advance answers, the frame changes can be applied. Table~\ref{tab:flips} and Fig.~\ref{fig:flips} report how often a detector changes its degeneracy decision when only the body frame changes, with Wilson $95\%$ confidence intervals from $7300$ samples per cell. Three observations emerge.

First, for isotropic metrics only translation matters: pure rotation leaves them unchanged, as Proposition~\ref{prop:ortho} requires. The Jacobi preconditioner is the exception, changing $7.6\%$ of decisions because $\diag(H)$ is axis-defined. Second, a random orthogonal replacement for the adjoint leaves the unnormalized spectrum unchanged, isolating the non-orthogonality of $\Ad(T)$. Constant metrics $\diag(\alpha I_3,\beta I_3)$ with $\alpha\neq\beta$ (Proposition~\ref{prop:isotropic}) respond at rates between $8.0\%$ and $40.3\%$ because they are not invariant to a general orthogonal map. Third, the proposed criterion is unchanged in every arm and at every magnitude.

These rates should be read at practical lever arms, measured in Section~\ref{sec:extrinsics} at $0.032$ to $0.552$~m over nineteen real extrinsics. At $\|\dvec\|=0.1$~m, near that median, the label flip rate is at most $0.2\%$ for the fixed and block-scaled metrics; at $0.5$~m, the upper end, at most $1.0\%$. Binary labels are fairly robust at practical lever arms. The downstream correction is not, as Section~\ref{sec:downstream} shows.

\begin{table}[t]
\caption{Decision Changes Across the Pure-Rotation, Pure-Translation, and Orthogonal-Control Arms}
\label{tab:flips}
\centering
\footnotesize
\setlength{\tabcolsep}{5.5pt}
\begin{tabular}{@{}lccccc@{}}
\toprule
\multirow{2}{*}{Metric} & \multicolumn{4}{c}{$\Ad(T)$ arms} & Orthogonal \\
\cmidrule(lr){2-5}
 & rot.\ only & $0.1$~m & $0.5$~m & $20$~m & control \\
\midrule
$I_6$                       & 0.000 & 0.001 & 0.010 & 0.204 & 0.000 \\
$\diag(\ell^2I,I)$, $\ell{=}0.25$ & 0.000 & 0.001 & 0.003 & 0.452 & 0.080 \\
$\diag(\ell^2I,I)$, $\ell{=}2.85$ & 0.000 & 0.001 & 0.008 & 0.393 & 0.403 \\
$\diag(\ell^2I,I)$, $\ell{=}10$   & 0.000 & 0.002 & 0.001 & 0.208 & 0.361 \\
Block scaling~\cite{Papais2026}   & 0.000 & 0.001 & 0.006 & 0.440 & 0.113 \\
$\diag(H)$                        & \textbf{0.076} & 0.083 & 0.081 & 0.882 & 0.643 \\
\textbf{Point displacement}       & \textbf{0.000} & \textbf{0.000} & \textbf{0.000} & \textbf{0.000} & \textbf{0.000} \\
\bottomrule
\end{tabular}
\end{table}

\begin{figure}[t]
\centering
\includegraphics[width=\columnwidth]{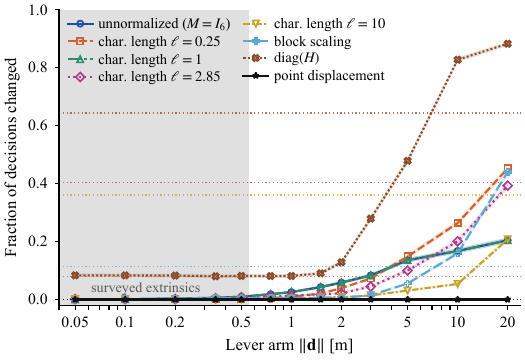}
\caption{Decision-change rate versus pure-translation lever arm, with Wilson $95\%$ confidence bands. The shaded strip spans the nineteen surveyed lever arms; dotted lines show the orthogonal control, where $\Ad(T)$ is replaced by a random orthogonal matrix.}
\label{fig:flips}
\end{figure}

\subsection{What the Two Parts of a Frame Change Break}
\label{sec:dissoc}

A binary flag is the coarsest report and the only one measured so far, so the rates above appear reassuring; Table~\ref{tab:dissoc} resolves the same experiment by reported quantity. Under either metric, pure rotation leaves the binary decision, degenerate-direction count, and rotational-versus-translational character of the weakest direction unchanged while altering the per-axis footprint on $87\%$ of pairs: axes are renamed without mixing the two blocks. Translation alters all four under the unnormalized spectrum, the empirical counterpart of Proposition~\ref{prop:ortho}.

The last row of each block is the one that merits particular attention. Under a pure rotation and under the orthogonal control the per-axis footprint changes at essentially the same rate for the equivariant metric as for the unnormalized spectrum, $87\%$ and up to $99.7\%$ respectively. This is not a shortcoming of a particular normalization: a one-dimensional degenerate mode is a screw, six independent per-axis bits cannot represent one, and no choice of $M$ alters that. The invariance established in Section~\ref{sec:invariance} concerns the eigenvalues and the subspace; we therefore propose reporting the latter as a subspace rather than as axes.

\begin{table}[t]
\caption{Change Rates by Reported Quantity}
\label{tab:dissoc}
\centering
\footnotesize
\begin{tabular}{@{}llcccc@{}}
\toprule
Metric & Quantity & rot.\ only & $\|\dvec\|{=}1$ & $\|\dvec\|{=}20$ & orth.\ ctrl. \\
\midrule
\multirow{4}{*}{$I_6$}
 & binary flag       & 0.000 & 0.024 & 0.204 & 0.000 \\
 & no.\ of directions& 0.000 & 0.142 & 0.851 & 0.000 \\
 & rot./transl.\ type& 0.000 & 0.037 & 0.961 & 0.687 \\
 & per-axis footprint& \textbf{0.875} & 0.125 & 0.978 & 0.998 \\
\midrule
\multirow{4}{*}{\shortstack[l]{Point\\displ.}}
 & binary flag       & 0.000 & 0.000 & 0.000 & 0.000 \\
 & no.\ of directions& 0.000 & 0.000 & 0.000 & 0.000 \\
 & rot./transl.\ type& 0.000 & 0.002 & 0.064 & 0.543 \\
 & per-axis footprint& \textbf{0.872} & 0.063 & 0.520 & 0.997 \\
\bottomrule
\end{tabular}
\end{table}

\subsection{Threshold Transfer}
\label{sec:transfer}

Frame stability is one property; threshold portability is another, often the one encountered first in integration. Each threshold is held fixed and compared with its pre-transfer decision. Four transfers alter one quantity each: body frame, numerical coordinate scale with detector parameters fixed, uniform scene scale, and sequence. The first complements Table~\ref{tab:flips}; the other three appear in Table~\ref{tab:transfer}. A consistent physical unit conversion also converts any dimensioned characteristic length, unlike the fixed-parameter coordinate-rescaling stress test here.

The two failure modes in Table~\ref{tab:metrics} separate cleanly. $H$-derived normalizations (block scaling and Jacobi preconditioning) remain stable under fixed-parameter coordinate and scene rescaling but fail under body-frame or sequence changes, whose worst four-by-four transfer cells are $0.06$ and $0.00$ respectively. Numerically fixed metrics show the opposite coordinate-rescaling behavior: the worst tabulated cell retains $0.55$, and one unshown characteristic length falls to $0.49$. A characteristic-length detector with dimensioned $\ell$ converted consistently should not fail a pure unit conversion. The proposed criterion gives unity agreement in all four tested transfers without retuning a dimensioned normalization parameter. Corollary~\ref{cor:invariance} guarantees the first three for the point-displacement rule; the fourth holds empirically across all sixteen comparison cells.

\begin{table}[t]
\caption{Decision Agreement After Transporting the Threshold}
\label{tab:transfer}
\centering
\footnotesize
\begin{tabular}{@{}lcccc@{}}
\toprule
\multirow{2}{*}{Metric} & \multicolumn{2}{c}{Coord. rescale} & Scene & Cross-seq. \\
\cmidrule(lr){2-3}
 & $\times10^{-2}$ & $\times10^{3}$ & $\times0.1$ & worst cell \\
\midrule
$I_6$                       & 0.795 & 0.997 & 0.841 & 0.169 \\
$\diag(\ell^2I,I)$, $\ell{=}2.85$ & 0.553 & 0.940 & 0.553 & 0.068 \\
Block scaling~\cite{Papais2026}   & 1.000 & 1.000 & 1.000 & 0.059 \\
$\diag(H)$                        & 1.000 & 1.000 & 1.000 & 0.000 \\
\textbf{Point displacement}       & \textbf{1.000} & \textbf{1.000} & \textbf{1.000} & \textbf{1.000} \\
\bottomrule
\end{tabular}
\end{table}

\subsection{Consequence for the Applied Correction}
\label{sec:downstream}

Angles below are Euclidean coordinate-space diagnostics in the metre-based convention, not unit-invariant physical angles. A detector projects the pose update onto the well-conditioned subspace; another sensor supplies the discarded component, or it remains unchanged. With $P=\sum_{k\notin\mathcal{Z}}v_kv_k^{\top}M$, where $\mathcal{Z}$ indexes declared-degenerate directions and the $v_k$ are $M$-orthonormal, two frame choices differ by $\Delta=(P_1-A\,P_2\,A^{-1})\,\xi_{\mathrm{reg}}$, which vanishes for an equivariant metric. Reversing the congruence or using a Euclidean projector in place of the $M$-orthogonal one raises the discrepancy from $10^{-16}$~m to $9.4\times10^{-3}$ and $2.8\times10^{-7}$~m respectively.

For the unnormalized spectrum in the pure-translation arm, $\|\dvec\|=0.5$~m gives a median translational difference of $4.0$~mm, a $95$th percentile of $12$~cm, a maximum of $0.87$~m, and $69.5\%$ of pairs above one millimetre (Wilson $95\%$: $[0.684,0.705]$). At $0.1$~m the corresponding values are $0.7$~mm, $2.1$~cm, $0.87$~m, and $44.5\%$; the other baselines are comparable.

\begin{table}[t]
\caption{The Same Registration in Two Body Frames, for One Frame Pair at That Rig's Published Calibration}
\label{tab:instance}
\centering
\footnotesize
\setlength{\tabcolsep}{5pt}
\begin{tabular}{@{}lcc@{}}
\toprule
Reported quantity & Unnormalized & Point displacement \\
\midrule
binary degeneracy flag          & unchanged & unchanged \\
no.\ of degenerate directions   & unchanged & unchanged \\
degenerate subspace             & $0.22^{\circ}$ & $0.000^{\circ}$ \\
smallest eigenvalue             & $-1.4\%$ & $2\times10^{-15}$ \\
six per-axis flags              & \textbf{changes} & \textbf{changes} \\
applied pose correction         & $6.3$~mm & $6\times10^{-16}$~m \\
\bottomrule
\end{tabular}
\end{table}

\begin{figure}[t]
\centering
\includegraphics[width=\columnwidth]{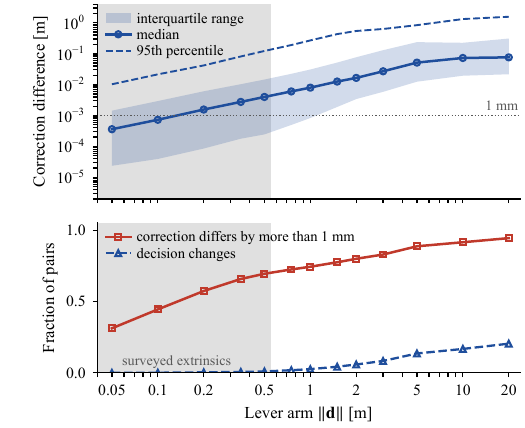}
\caption{Translational correction difference versus lever arm for the unnormalized spectrum. Boxes show quartiles, medians, and fifth--ninety-fifth percentiles; the solid curve is the fraction above one millimetre and the dashed curve is the label-flip rate. Downstream changes are roughly two orders of magnitude more frequent at practical lever arms. The point-displacement result is identically zero.}
\label{fig:downstream}
\end{figure}

At $0.1$~m the label flips on only $0.1\%$ of pairs, against $44.5\%$ for the applied correction (Fig.~\ref{fig:downstream}). The number of threshold-selected weak directions can stay fixed while their span rotates, causing the projector to discard a different component. Label-flip frequency therefore understates estimator-level changes.

Using each sequence's published LiDAR-to-body transform ($\|\dvec\|=0.032$--$0.462$~m), direct reassembly agrees with the congruence to $1.7\times10^{-15}$. Under the unnormalized spectrum the degenerate subspace rotates by up to $5.5^{\circ}$ on one sequence and $40.0^{\circ}$ on another, the smallest eigenvalue moves by up to $35\%$, and the correction exceeds one millimetre on $18.5$, $31.3$, $71.2$ and $100\%$ of pairs. Pooled over the $365$ pairs this is $47.9\%$, against the $0.8\%$ on which the binary flag changes, a factor of $58$ at the published extrinsics of these rigs. The most degenerate sequence has an $8.1$~mm median and the largest single difference is $0.85$~m; under the point-displacement metric the corresponding numerical residuals are $2.6\times10^{-6}$ degrees, $8\times10^{-13}$ and $10^{-14}$~m. Table~\ref{tab:instance} resolves one pair by reported quantity; under either metric, the per-axis footprint is the one row that changes.

The sequence whose extrinsic is $0.058$~m exceeds one millimetre on every pair, and its detector flags a degenerate direction on every pair. The one with the largest extrinsic ($0.462$~m) has a median of exactly zero: over half of its pairs are not degenerate, so the projector reduces to the identity in both frames. Yet it produces the largest single outlier. Projector activity governs incidence; the lever arm governs magnitude. Even on the latter sequence the per-axis footprint changes on $47\%$ of pairs under the unnormalized spectrum and $29\%$ under the equivariant metric. Differences are reported in absolute terms because the identity-initialized one-shot registration itself misses the corridor-axis motion: the median ground-truth displacement over adjacent frame pairs about one second apart is $0.85$~m against $0.03$~m reported. Its own error, dominated by the degeneracy under study, is therefore no independent accuracy baseline.

\subsection{Extrinsics in Practice}
\label{sec:extrinsics}

Nineteen LiDAR-to-body extrinsics were compiled from primary calibration files, logged static transforms and factory metadata. Their lever arms span $0.032$--$0.552$~m (median $0.114$~m), and eighteen have nonzero screw translation, $\evec^{\top}\dvec\neq0$, so by Proposition~\ref{prop:adapted} admit no adapted constant metric.

The sole zero-pitch case arises from two mounting offsets ($+0.06$ and $-0.06$~m) that cancel in the static transform chain. Its adapted metric satisfies $\Ad(T)^{\top}M_0\Ad(T)=M_0$ to $1.8\times10^{-16}$, but is not block diagonal (off-diagonal norm $3.4$), differs from $I_6$ by relative $1.14$, and is accompanied by a non-orthogonal adjoint with $\|A^{\top}A-I\|/\|I\|=0.36$. No tested detector is invariant even in this exception.

At the largest surveyed lever arm Corollary~\ref{cor:cert} gives a band of $[0.58,1.72]$ for the unnormalized spectrum; each characteristic length has its own band through $\delta_{\mathrm{eff}}$. The certificate applies to the five constant metrics under a non-identity adjoint, $1.06\times10^{6}$ of the $2.51\times10^{6}$ evaluations; the orthogonal control has no adjoint, and the three data-derived metrics have no constant $M_0$. About a tenth of those carry no lever arm. There the band collapses to the point $1$, and Proposition~\ref{prop:ortho} already excludes a change, so these cases are set aside as vacuous. Of the $9.49\times10^{5}$ remaining, $5.97\times10^{5}$ fall outside their band and none changes the degeneracy decision, against $21.0\%$ among the $3.52\times10^{5}$ inside it.

\noindent\textit{Scope.} The four baselines are implemented through their implied metrics; one is also reproduced from its published description. For the remaining three, the corresponding publications are the only implementation references. Threshold calibration uses simplified surface sampling with a single spacing per configuration rather than a specific scan pattern. At the coarsest voxel, $0.20$~m against a normal-estimation radius of $0.30$~m, the median neighbourhood is seven points, below the required eight. At the highest noise level, this is where the unnormalized spectrum and $\ell=10$ lose the threshold they hold elsewhere. Section~\ref{sec:downstream} measures one projection step, not accumulated trajectory error.

\section{Conclusion}

A body-frame change sends the point-to-plane information matrix through a congruence. Nullity and the adjoint-identified physical twist subspace are preserved; the per-axis footprint and a thresholded spectrum are not. The circular tunnel shows this without noise or threshold ambiguity.

Universal spectral invariance over positive-semidefinite information forms holds exactly when the metric rule is equivariant. Fixed metrics fail this universal condition. A rig-adapted constant metric exists only at zero screw pitch, a condition observed once in nineteen calibrations; the corresponding metric has a form unlike those in use. The point-displacement metric $M=\sum_iJ_i^{\top}J_i$ satisfies the condition and yields dimensionless, scene-scale-invariant eigenvalues. Its generalized eigenvalues are invariant across frames, consistent unit changes, and scene scales; under the tested fixed-parameter coordinate rescaling and in the cross-sequence comparison, its threshold preserves every decision without retuning a dimensioned normalization parameter. For fixed metrics, the effective lever arm bounds the drift and certifies when no frame change can cross the threshold. Yet per-axis footprints change under any metric because six independent bits cannot encode a screw. Degeneracy should therefore be reported as a subspace of $\se$ with its dimensionless eigenvalues.

\bibliographystyle{IEEEtran}
\bibliography{references}

@inproceedings{Zhang2016,
  author    = {Zhang, Ji and Kaess, Michael and Singh, Sanjiv},
  title     = {On Degeneracy of Optimization-Based State Estimation Problems},
  booktitle = {Proc. IEEE Int. Conf. Robot. Autom. (ICRA)},
  year      = {2016},
  pages     = {809--816},
  doi       = {10.1109/ICRA.2016.7487211}
}

@inproceedings{Hinduja2019,
  author    = {Hinduja, Akshay and Ho, Bing-Jui and Kaess, Michael},
  title     = {Degeneracy-Aware Factors with Applications to Underwater {SLAM}},
  booktitle = {Proc. IEEE/RSJ Int. Conf. Intell. Robots Syst. (IROS)},
  year      = {2019},
  pages     = {1293--1299},
  doi       = {10.1109/IROS40897.2019.8968577}
}

@inproceedings{Nubert2022,
  author    = {Nubert, Julian and Walther, Etienne and Khattak, Shehryar and Hutter, Marco},
  title     = {Learning-Based Localizability Estimation for Robust {LiDAR} Localization},
  booktitle = {Proc. IEEE/RSJ Int. Conf. Intell. Robots Syst. (IROS)},
  year      = {2022},
  pages     = {17--24},
  doi       = {10.1109/IROS47612.2022.9982257}
}

@article{XICP2024,
  author  = {Tuna, Turcan and Nubert, Julian and Nava, Yoshua and Khattak, Shehryar and Hutter, Marco},
  title   = {{X-ICP}: Localizability-Aware {LiDAR} Registration for Robust Localization in Extreme Environments},
  journal = {IEEE Trans. Robot.},
  year    = {2024},
  volume  = {40},
  pages   = {452--471},
  doi     = {10.1109/TRO.2023.3335691}
}

@article{DRPM2024,
  author  = {Hatleskog, Johan and Alexis, Kostas},
  title   = {Probabilistic Degeneracy Detection for Point-to-Plane Error Minimization},
  journal = {IEEE Robot. Autom. Lett.},
  year    = {2024},
  volume  = {9},
  number  = {12},
  pages   = {11234--11241},
  doi     = {10.1109/LRA.2024.3484153}
}

@article{FieldAnalysis2025,
  author  = {Tuna, Turcan and Nubert, Julian and Pfreundschuh, Patrick and Cadena, C{\'e}sar and Khattak, Shehryar and Hutter, Marco},
  title   = {Informed, Constrained, Aligned: A Field Analysis on Degeneracy-Aware Point Cloud Registration in the Wild},
  journal = {IEEE Trans. Field Robot.},
  year    = {2025},
  volume  = {2},
  pages   = {485--515},
  doi     = {10.1109/TFR.2025.3576053}
}

@inproceedings{SuperLoc2025,
  author    = {Zhao, Shibo and Zhu, Honghao and Gao, Yuanjun and Kim, Beomsoo and Qiu, Yuheng and Johnson, Aaron M. and Scherer, Sebastian A.},
  title     = {{SuperLoc}: The Key to Robust {LiDAR}-Inertial Localization Lies in Predicting Alignment Risks},
  booktitle = {Proc. IEEE Int. Conf. Robot. Autom. (ICRA)},
  year      = {2025},
  pages     = {14080--14086},
  doi       = {10.1109/ICRA55743.2025.11127720}
}

@article{DCReg2026,
  author        = {Hu, Xiangcheng and Chen, Xieyuanli and Jia, Mingkai and Wu, Jin and Tan, Ping and Waslander, Steven L.},
  title         = {{DCReg}: Decoupled Characterization for Efficient Degenerate {LiDAR} Registration},
  journal       = {Int. J. Robot. Res.},
  year          = {2026},
  note          = {to appear},
  eprint        = {2509.06285},
  archivePrefix = {arXiv},
  primaryClass  = {cs.RO}
}

@article{Papais2026,
  author        = {Papais, Katya M. and Zhao, Wenda and Barfoot, Timothy D.},
  title         = {Degeneracy-Resilient Teach and Repeat for Geometrically Challenging Environments Using {FMCW} {Lidar}},
  journal       = {arXiv preprint arXiv:2603.10248},
  year          = {2026},
  eprint        = {2603.10248},
  archivePrefix = {arXiv},
  primaryClass  = {cs.RO},
  doi           = {10.48550/arXiv.2603.10248}
}

@article{GEODE2026,
  author  = {Chen, Zhiqiang and Qi, Yuhua and Feng, Dapeng and Zhuang, Xuebin and Chen, Hongbo and Hu, Xiangcheng and Wu, Jin and Peng, Kelin and Lu, Peng},
  title   = {Heterogeneous {LiDAR} Dataset for Benchmarking Robust Localization in Diverse Degenerate Scenarios},
  journal = {Int. J. Robot. Res.},
  year    = {2026},
  volume  = {45},
  number  = {1},
  pages   = {6--22},
  doi     = {10.1177/02783649251344967}
}

@article{Wang2026,
  author  = {Wang, Jiancheng and Cai, Chenyuan and Wang, Yifei and Li, Yuxiang and Zhang, Shiwu and Chen, Haoyao},
  title   = {Screw-Based Feature Constraint Model and Degeneracy Analysis for Robotic State Estimation: Theory and Experiments},
  journal = {Int. J. Robot. Res.},
  year    = {2026},
  note    = {{OnlineFirst}, doi: 10.1177/02783649261463720},
  doi     = {10.1177/02783649261463720}
}

@book{Barfoot2024,
  author    = {Barfoot, Timothy D.},
  title     = {State Estimation for Robotics},
  publisher = {Cambridge Univ. Press},
  address   = {Cambridge, U.K.},
  edition   = {2nd},
  year      = {2024},
  doi       = {10.1017/9781009299909}
}

@article{Sola2018,
  author        = {Sol{\`a}, Joan and Deray, J{\'e}r{\'e}mie and Atchuthan, Dinesh},
  title         = {A Micro Lie Theory for State Estimation in Robotics},
  journal       = {arXiv preprint arXiv:1812.01537},
  year          = {2018},
  eprint        = {1812.01537},
  archivePrefix = {arXiv},
  primaryClass  = {cs.RO},
  doi           = {10.48550/arXiv.1812.01537}
}

@inproceedings{Gelfand2003,
  author    = {Gelfand, Natasha and Ikemoto, Leslie and Rusinkiewicz, Szymon and Levoy, Marc},
  title     = {Geometrically Stable Sampling for the {ICP} Algorithm},
  booktitle = {Proc. 4th Int. Conf. 3-D Digit. Imag. Model. (3DIM)},
  year      = {2003},
  pages     = {260--267},
  doi       = {10.1109/IM.2003.1240258}
}

@inproceedings{Censi2007,
  author    = {Censi, Andrea},
  title     = {An Accurate Closed-Form Estimate of {ICP}'s Covariance},
  booktitle = {Proc. IEEE Int. Conf. Robot. Autom. (ICRA)},
  year      = {2007},
  pages     = {3167--3172},
  doi       = {10.1109/ROBOT.2007.363961}
}

@inproceedings{Nobili2018,
  author    = {Nobili, Simona and Tinchev, Georgi and Fallon, Maurice},
  title     = {Predicting Alignment Risk to Prevent Localization Failure},
  booktitle = {Proc. IEEE Int. Conf. Robot. Autom. (ICRA)},
  year      = {2018},
  pages     = {1003--1010},
  doi       = {10.1109/ICRA.2018.8462890}
}

@inproceedings{Zhen2019,
  author    = {Zhen, Weikun and Scherer, Sebastian},
  title     = {Estimating the Localizability in Tunnel-Like Environments Using {LiDAR} and {UWB}},
  booktitle = {Proc. IEEE Int. Conf. Robot. Autom. (ICRA)},
  year      = {2019},
  pages     = {4903--4908},
  doi       = {10.1109/ICRA.2019.8794167}
}

@inproceedings{Nissov2024,
  author    = {Nissov, Morten and Khedekar, Nikhil and Alexis, Kostas},
  title     = {Degradation Resilient {LiDAR}-Radar-Inertial Odometry},
  booktitle = {Proc. IEEE Int. Conf. Robot. Autom. (ICRA)},
  year      = {2024},
  pages     = {8587--8594},
  doi       = {10.1109/ICRA57147.2024.10611444}
}

@inproceedings{Vaidis2024,
  author    = {Vaidis, Maxime and Hassanzadeh Shahraji, Mohsen and Daum, Effie and Dubois, William and Gigu{\`e}re, Philippe and Pomerleau, Fran{\c{c}}ois},
  title     = {{RTS-GT}: Robotic Total Stations Ground Truthing Dataset},
  booktitle = {Proc. IEEE Int. Conf. Robot. Autom. (ICRA)},
  year      = {2024},
  pages     = {17050--17056},
  doi       = {10.1109/ICRA57147.2024.10610998}
}

@inproceedings{ZhangM3DGR2025,
  author    = {Zhang, Deteng and Zhang, Junjie and Sun, Yan and Li, Tao and Yin, Hao and Xie, Hongzhao and Yin, Jie},
  title     = {Towards Robust Sensor-Fusion Ground {SLAM}: A Comprehensive Benchmark and a Resilient Framework},
  booktitle = {Proc. IEEE/RSJ Int. Conf. Intell. Robots Syst. (IROS)},
  year      = {2025},
  pages     = {8894--8901},
  doi       = {10.1109/IROS60139.2025.11247507}
}

@article{Ostrowski1959,
  author  = {Ostrowski, Alexander M.},
  title   = {A Quantitative Formulation of {S}ylvester's Law of Inertia},
  journal = {Proc. Natl. Acad. Sci. USA},
  year    = {1959},
  volume  = {45},
  number  = {5},
  pages   = {740--744},
  doi     = {10.1073/pnas.45.5.740}
}

\end{document}